\newcommand{\set}[1]{\left\{#1\right\}}
\newcommand{\pr}[1]{\left(#1\right)}
\newcommand{\fpr}[1]{\mathopen{}\left(#1\right)}
\DeclareRobustCommand{\dispfunc}[2]{%
  \ensuremath{%
  \ifthenelse{\equal{#2}{}}%
    {\mathit{#1}}%
    {\mathit{#1}\fpr{#2}}}}
\newcommand{\Rcv}{R^2_{\mathit{cv}}}
\newcommand{\Rnaive}{R^2_{\mathit{naive}}}
\newtheorem{theorem}{Theorem}
\newtheorem{proposition}[theorem]{Proposition}
\definecolor{yafaxiscolor}{rgb}{0.3, 0.3, 0.3}
\definecolor{yafcolor1}{rgb}{0.4, 0.165, 0.553}
\definecolor{yafcolor2}{rgb}{0.949, 0.482, 0.216}
\definecolor{yafcolor3}{rgb}{0.47, 0.549, 0.306}
\definecolor{yafcolor4}{rgb}{0.925, 0.165, 0.224}
\definecolor{yafcolor5}{rgb}{0.141, 0.345, 0.643}
\definecolor{yafcolor6}{rgb}{0.965, 0.933, 0.267}
\definecolor{yafcolor7}{rgb}{0.627, 0.118, 0.165}
\definecolor{yafcolor8}{rgb}{0.878, 0.475, 0.686}
\newlength{\yafaxispad}
\newlength{\yaftlpad}
\newlength{\yaflabelpad}
\newlength{\yafaxiswidth}
\newlength{\yafticklen}
\def\pgfplots@drawtickgridlines@INSTALLCLIP@onorientedsurf#1{}
\newcommand{\yafdrawxaxis}[2]{
	\pgfplotstransformcoordinatex{#1}\let\xmincoord=\pgfmathresult 
	\pgfplotstransformcoordinatex{#2}\let\xmaxcoord=\pgfmathresult 
	\pgfsetlinewidth{\yafaxiswidth} 
	\pgfsetcolor{yafaxiscolor}
	\pgfpathmoveto{\pgfpointadd{\pgfpointadd{\pgfplotspointrelaxisxy{0}{0}}{\pgfqpointxy{\xmincoord}{0}}}{\pgfqpoint{-0.5\yafaxiswidth}{\yafaxispad}}}
	\pgfpathlineto{\pgfpointadd{\pgfpointadd{\pgfplotspointrelaxisxy{0}{0}}{\pgfqpointxy{\xmaxcoord}{0}}}{\pgfqpoint{0.5\yafaxiswidth}{\yafaxispad}}}
	\pgfusepath{stroke}

}
\newcommand{\yafdrawyaxis}[2]{
	\pgfplotstransformcoordinatey{#1}\let\ymincoord=\pgfmathresult 
	\pgfplotstransformcoordinatey{#2}\let\ymaxcoord=\pgfmathresult 
	\pgfsetlinewidth{\yafaxiswidth} 
	\pgfsetcolor{yafaxiscolor}
	\pgfpathmoveto{\pgfpointadd{\pgfpointadd{\pgfplotspointrelaxisxy{0}{0}}{\pgfqpointxy{0}{\ymincoord}}}{\pgfqpoint{\yafaxispad}{-0.5\yafaxiswidth}}}
	\pgfpathlineto{\pgfpointadd{\pgfpointadd{\pgfplotspointrelaxisxy{0}{0}}{\pgfqpointxy{0}{\ymaxcoord}}}{\pgfqpoint{\yafaxispad}{0.5\yafaxiswidth}}}
	\pgfusepath{stroke}
}
\newcommand{\yafdrawaxis}[4]{\yafdrawxaxis{#1}{#2}\yafdrawyaxis{#3}{#4}}
\pgfplotsset{axis y line=left, axis x line=bottom,
	tick align=outside,
	compat = 1.3,
	tickwidth=\yafticklen,
	clip = false,
	every axis title shift = 0pt,
    x axis line style= {-, line width = 0pt, opacity = 0},
    y axis line style= {-, line width = 0pt, opacity = 0},
    x tick style= {line width = \yafaxiswidth, color=yafaxiscolor, yshift = \yafaxispad},
    y tick style= {line width = \yafaxiswidth, color=yafaxiscolor, xshift = \yafaxispad},
    x tick label style = {font=\scriptsize, yshift = \yaftlpad},
    y tick label style = {font=\scriptsize, xshift = \yaftlpad},
    every axis y label/.style = {at = {(ticklabel cs:0.5)}, rotate=90, anchor=center, font=\scriptsize, yshift = -\yaflabelpad},
    every axis x label/.style = {at = {(ticklabel cs:0.5)}, anchor=center, font=\scriptsize, yshift = \yaflabelpad},
    x tick label style = {font=\scriptsize, yshift = 1pt},
    grid = major,
    major grid style  = {dash pattern = on 1pt off 3 pt},
	every axis plot post/.append style= {line width=\yafaxiswidth} ,
	legend cell align = left,
	legend style = {inner sep = 1pt, cells = {font=\scriptsize}},
	legend image code/.code={%
		\draw[mark repeat=2,mark phase=2,#1] 
		plot coordinates { (0cm,0cm) (0.15cm,0cm) (0.3cm,0cm) };%
	} 
}
\title{A note on adjusting $R^2$ for using with cross-validation}
\author[1,2]{Indr\.e \v{Z}liobait\.e\thanks{indre.zliobaite@helsinki.fi}}
\author[2]{Nikolaj Tatti}
\affil[1]{Dept. of Geosciences and Geography, University of Helsinki, Finland}
\affil[2]{Helsinki Institute for Information Technology HIIT, Aalto University, Finland}
\begin{document}

\maketitle

\begin{abstract}
We show how to adjust the coefficient of determination ($R^2$) when used for measuring predictive accuracy via leave-one-out cross-validation. 
\end{abstract}

\section{Background}

The coefficient of determination, denoted as  $R^2$, is commonly used in evaluating the performance of predictive models, particularly in life sciences. It indicates what proportion of variance in the target variable is explained by model predictions.  $R^2$ can be seen as a normalized version of the mean squared error. Normalization is such that $R^2=0$ is equivalent to the performance of a naive baseline always predicting a constant value, equal to the mean of the target variable. $R^2<0$ means that the performance is worse than the naive baseline. $R^2=1$ is the ideal prediction.

Given a dataset of $n$ points $R^2$ is computed as 
\begin{equation}
	R^2 = 1 - \frac{\sum_n (y_i - \hat{y}_i)^2}{\sum_n (y_i - \bar{y})^2},
\label{eq:R2}	
\end{equation}
where $\hat{y}_i$ is the prediction for $y_i$, and $\bar{y}$ is the average value of
$y_i$. Traditionally $R^2$ is computed over all data points used for model fitting. 

The naive baseline is a prediction strategy which does not use any model, but simply always predicts a constant value, equal to the mean of the target variable, that is, $\hat{y}_i = \bar{y}$. 
It follows from Eq.~(\ref{eq:R2}) that then for the naive predictor $R^2 = 0$. 

Cross-validation is a standard procedure commonly used in machine learning for
assessing out-of-sample performance of a predictive model~\cite{Hastie}. The
idea is to partition data into $k$ chunks at random, leave one chunk out from
model calibration, use that chunk for testing model performance, and continue
the same procedure with all the chunks. Leave-one-out cross-validation (LOOCV)
is used when sample size is particularly small, then the test set consists of
one data point at a time. 

When cross-validation is used, the naive baseline that always predicts a
constant value, the average value of the outputs in the training set, gives 
 $R^2<0$ if computed according to Eq.~\ref{eq:R2}. 
 This happens due to an improper normalization: the denominator in
Eq.~\ref{eq:R2} uses $\bar{y}$, and $\bar{y}$ is computed over the \emph{whole}
dataset, and not just the training data. 

\section{Cross-validated $R^2$}

To correct this, we define
\[
	\Rcv =  1 - \frac{\sum_n (y_i - \hat{y}_i)^2}{\sum_n (y_i - \bar{y}_i)^2},
\]
where $\bar{y}_i$ is the average of outputs without $y_i$,
\[
	\bar{y}_i = \frac{1}{n - 1} \sum_{j = 1, j \neq i}^n y_j\quad.
\]
That is, $\bar{y}_i$ is the naive predictor based on the training data, \emph{solely}.

We show that adjusted $\Rcv$ for leave-one-out cross-validation can be expressed as
\begin{equation}
\Rcv = \frac{R^2 - \Rnaive}{1 - \Rnaive},
\end{equation}
where $R^2$ is measured in a standard way as in Eq.~(\ref{eq:R2}), and $\Rnaive$ is the result of the naive constant predictor, and is equal to 
\begin{equation}
\Rnaive = 1 - \frac{n^2}{(n-1)^2},
\label{eq:naive}
\end{equation}
where $n$ is the number of data points. 

Figure~\ref{fig:naive} plots the standard $R^2$ score for the naive predictor, as per Eq.~(\ref{eq:naive}).

\begin{figure}
\begin{tikzpicture}
    \begin{axis}[domain=2:30,
		height=6cm,width=7cm,
		xlabel = {number of data points, $n$}, ylabel = standard $R^2$,
		xtick = {2,6,...,30},
		ymax = 0
	]
    	\addplot[yafcolor2,mark=none,very thick] {1 - (x^2)/((x-1)^2)}; 
\pgfplotsextra{\yafdrawaxis{2}{30}{-3}{0}}
\end{axis}

\end{tikzpicture}
\caption{The standard $R^2$ score for the naive constant predictor.}
\label{fig:naive}
\end{figure}
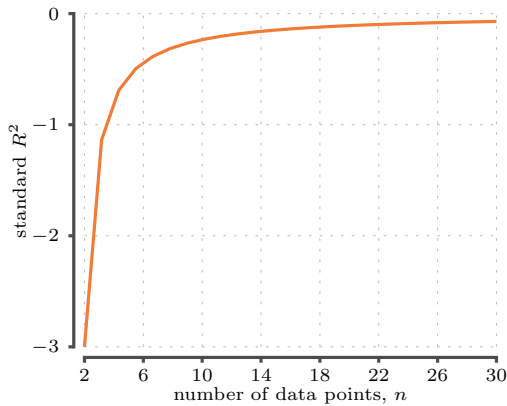

The remaining part of the paper describes mathematical proof for this adjustment. 
We will show that $\Rnaive$ does not depend on the variance of the target variable $y$, only depends on the size of the dataset $n$. 




\section{How this works}

Let us define $\Rnaive$ as the $R^2$ score for naive predictor based on training data,
\[
	\Rnaive =  1 - \frac{\sum (y_i - \bar{y}_i)^2}{\sum (y_i - \bar{y})^2}.
\]

\begin{proposition}
Let $R^2$ be the $R^2$ score of the predictor. The adjusted $R^2$ is equal to
\begin{equation}
	\Rcv = \frac{R^2 - \Rnaive}{1 - \Rnaive},
\end{equation}
where the leave-one-out cross-validated $\Rnaive$ for the constant prediction is  
\[
	\Rnaive = 1 - \pr{\frac{n}{n - 1}}^2, 
\]
where $n$ is the number of data points.
\end{proposition}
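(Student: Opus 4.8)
The plan is to reduce everything to three sums of squares and then to a single pointwise identity. Write $S = \sum_i (y_i - \hat{y}_i)^2$ for the residual sum of squares of the predictor, $T = \sum_i (y_i - \bar{y})^2$ for the total sum of squares about the global mean, and $U = \sum_i (y_i - \bar{y}_i)^2$ for the sum of squares about the leave-one-out means. With this notation the three quantities in play are $R^2 = 1 - S/T$, $\Rcv = 1 - S/U$, and $\Rnaive = 1 - U/T$, so the whole proposition becomes a statement relating these ratios.

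The first identity is purely algebraic and I would dispatch it immediately. Substituting the definitions into $\frac{R^2 - \Rnaive}{1 - \Rnaive}$, the numerator becomes $(U - S)/T$ and the denominator becomes $U/T$; the factor $T$ cancels, leaving $(U - S)/U = 1 - S/U = \Rcv$. Note that this step uses nothing about the particular predictor or about the value of $\Rnaive$ — it holds for any three such sums of squares.

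The substantive step is evaluating $\Rnaive$, and the key is to relate $\bar{y}_i$ to $\bar{y}$. Since $\sum_{j \neq i} y_j = n\bar{y} - y_i$, we have $\bar{y}_i = \frac{n\bar{y} - y_i}{n - 1}$, and a short simplification gives the pointwise proportionality
\[
	y_i - \bar{y}_i = \frac{n}{n-1}\pr{y_i - \bar{y}}.
\]
This is the crux of the argument: every leave-one-out deviation is the same fixed multiple of the corresponding global deviation. Squaring and summing then yields $U = \pr{\frac{n}{n-1}}^2 T$, so the ratio $U/T$ is a constant independent of the data, and $\Rnaive = 1 - \pr{\frac{n}{n-1}}^2$, matching Eq.~(\ref{eq:naive}) after simplification.

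I expect no serious obstacle; the proof is short once the right observation is made. The one place to be careful is the derivation of the pointwise identity — it is tempting to attack $U$ directly by expanding $\bar{y}_i$ inside the square, which produces cross terms and is more work. Recognizing instead that $\bar{y}_i$ is an affine function of $y_i$ that collapses the deviation to a scalar multiple of $y_i - \bar{y}$ is what makes both the data-independence of $\Rnaive$ and its exact value fall out cleanly.
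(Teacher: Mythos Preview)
Your proof is correct and follows essentially the same approach as the paper: both reduce to the pointwise proportionality $y_i-\bar y_i=\frac{n}{n-1}(y_i-\bar y)$, obtain $U=\pr{\frac{n}{n-1}}^2 T$, and then verify the algebraic ratio identity. The only cosmetic differences are that the paper first translates so that $\bar y=0$ before deriving the proportionality and proves the two parts in the opposite order; your direct derivation of $y_i-\bar y_i$ without the translation trick is slightly cleaner but not a genuinely different route.
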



\begin{proof}
Let us write
\[
	A = \sum (y_i - \hat{y}_i)^2, \quad B = \sum (y_i - \bar{y})^2
\]
and
\[
	C = \sum (y_i - \bar{y}_i)^2.
\]
Note that $R^2 = 1 - A / B$ and $\Rcv = 1 - A / C$.

Our first step is to show that $C = \alpha B$, where $\alpha = n^2 / (n - 1)^2$.
Note that
$A$, $B$ and $C$ do not change if we translate $\set{y_i}$ by a constant;
we can assume that $n\bar{y} = \sum_{i = 1}^n y_i = 0$.

This immediately implies 
\[
	\bar{y}_i = \frac{1}{n - 1} \sum_{j = 1, j \neq i}^n y_j = \frac{-y_i}{n - 1} + n\bar{y} =  \frac{-y_i}{n - 1} .
\]

The $i$th error term of $C$ is 
\[
	(y_i - \bar{y}_i)^2 = \pr{y_i + \frac{y_i}{n - 1}}^2 =  \pr{\frac{y_in}{n - 1}}^2 = \alpha y_i^2.
\]
This leads to
\[
	C = \alpha \sum_{i = 1}^n y_i^2 = \alpha B.
\]

Finally,
\[
\begin{split}
	\frac{R^2 - \Rnaive}{1 - \Rnaive} & = \frac{R^2 - 1 +  \alpha}{\alpha} \\
	& = \frac{1 - A / B - 1 + \alpha}{\alpha} \\
	& = 1 - \frac{A}{\alpha B} = 1 - \frac{A}{C} = \Rcv,
\end{split}
\]
which concludes the proof.
\end{proof}

\bibliographystyle{plain}
\bibliography{bib_R2}

\end{document}